\documentclass[sigconf]{aamas} 

\usepackage{balance} 
\usepackage{algorithm}
\usepackage{algorithmic}
\usepackage{amsmath}
\usepackage{amsthm}
\newtheorem{definition}{Definition}
\usepackage{latexsym}

\usepackage{amssymb}
\usepackage{booktabs}
\usepackage{enumitem}
\usepackage{graphicx}
\usepackage{color}
\usepackage{multirow}
\usepackage[table,xcdraw]{xcolor}
\usepackage{threeparttable}
\usepackage{adjustbox}
\usepackage{float}
\usepackage{appendix}

\setcopyright{ifaamas}
\acmConference[AAMAS '26]{Proc.\@ of the 25th International Conference
on Autonomous Agents and Multiagent Systems (AAMAS 2026)}{May 25 -- 29, 2026}
{Paphos, Cyprus}{C.~Amato, L.~Dennis, V.~Mascardi, J.~Thangarajah (eds.)}
\copyrightyear{2026}
\acmYear{2026}
\acmDOI{}
\acmPrice{}
\acmISBN{}

\acmSubmissionID{<<1178>>}

\title{TCRL: Temporal-Coupled Adversarial Training for Robust Constrained Reinforcement Learning in Worst-Case Scenarios}

\author{Wentao Xu}
\affiliation{
  \institution{Northeastern University}
  \city{Shenyang}
  \country{China}}
\email{20223441@stu.neu.edu.cn}
\author{Zhongming Yao}
\authornote{Corresponding author}
\affiliation{
  \institution{Northeastern University}
  \city{Shenyang}
  \country{China}}
\email{yaozming@stumail.neu.edu.cn}
\author{Weihao Li}
\affiliation{
  \institution{Northeastern University}
  \city{Shenyang}
  \country{China}}
\email{20223284@stu.neu.edu.cn}
\author{Zhenghang Song}
\affiliation{
  \institution{Zhejiang University}
  \city{Ningbo}
  \country{China}}
\email{22451060@zju.edu.cn}

\author{Yumeng Song}
\affiliation{
  \institution{Aalborg University}
  \city{Aalborg}
  \country{Denmark}}
\email{yumengs@cs.aau.dk}
\author{Tianyi Li}
\affiliation{
  \institution{Aalborg University}
  \city{Aalborg}
  \country{Denmark}}
\email{tianyi@cs.aau.dk}
\author{Yushuai Li}
\affiliation{
  \institution{Aalborg University}
  \city{Aalborg}
  \country{Denmark}}
\email{yusli@cs.aau.dk}

\begin{abstract}
Constrained Reinforcement Learning (CRL) aims to optimize decision-making policies under constraint conditions, making it highly applicable to safety-critical domains such as autonomous driving, robotics, and power grid management. However, existing robust CRL approaches predominantly focus on single-step perturbations and temporal-independent adversarial models, lacking explicit modeling of temporal-coupled perturbations robustness.To tackle these challenges, we propose \texttt{TCRL}, a novel \underline{\textbf{t}}emporal-coupled adversarial training framework for robust \underline{\textbf{c}}onstrained \underline{\textbf{r}}einforcement \underline{\textbf{l}}earning (\texttt{TCRL}) in worst-case scenarios. First, \texttt{TCRL} introduces a worst-case-perceived cost constraint function that estimates safety costs under temporal-coupled perturbations without the need to explicitly model adversarial attackers.
Second, \texttt{TCRL} establishes a dual-constraint defense mechanism towards the reward to counter temporal-coupled adversaries while maintaining the unpredictability of the reward.
The experimental results demonstrate that \texttt{TCRL} consistently outperforms existing methods in terms of robustness against temporal-coupled perturbation attacks across a variety of CRL tasks.
\end{abstract}

\keywords{Constrained reinforcement learning, Temporal-coupled, Adversarial training, Worst-case scenarios.}

\newcommand{\BibTeX}{\rm B\kern-.05em{\sc i\kern-.025em b}\kern-.08em\TeX}

\begin{document}


\pagestyle{fancy}
\fancyhead{}


\maketitle 


\section{Introduction}

Constrained reinforcement learning (CRL) is an important branch of traditional reinforcement learning (RL). In recent years, CRL has attracted widespread attention due to its unique ability to optimize decision-making policies under constraint conditions, maximizing task rewards while ensuring that system constraints are not violated. CRL has been widely applied in high-risk domains such as autonomous driving~\cite{zhang2024safe}, robotic control~\cite{roza2022safe}, and power grid management~\cite{mo2021safe,bui2025critical}.
However, when CRL is deployed in real-world physical environments, agents often encounter external adversarial perturbations, which expose significant shortcomings in the robustness of existing methods~\cite{li2023robust}. Existing work~\cite{liu2022robustness,meng2023integrating} has demonstrated that traditional CRL exhibits pronounced vulnerability to adversarial attacks, thereby underscoring the pressing need to enhance robustness of CRL for reliable deployment in real-world applications.

While robust RL has introduced some approaches to address environmental uncertainties~\cite{brunke2022safe,garcia2015comprehensive,he2023robotic,yang2022cup,liu2022constrained}, these approaches are not directly applicable to CRL.
The reason is the inherent conflict between safety constraints and reward maximization. In CRL, satisfying safety constraints is typically prioritized over optimizing rewards. In contrast, agents in robust RL tend to aggressively pursue reward maximization, often violating critical safety constraints during exploration, especially under adversarial conditions.
Existing studies~\cite{zhang2020robust,mankowitz2020robust} in CRL largely address robustness by modeling and defending against adversarial input perturbations under constraint conditions.
However, they are limited to addressing temporal-independent observational perturbations (e.g., bounded within a fixed-radius $\ell_{p}$ norm ball) and fail to consider robustness under worst-case temporal-coupled attack scenarios. 
In real-world environments, the state observations are often be corrupted by temporal-coupled perturbations. Attackers can generate strong temporal perturbations by learning the safety cost constraint function and reward function~\cite{jin2013reducing}. The attack intensity progressively increases over time, causing the agent to deviate from its target and potentially exhibit dangerous behaviors due to the influence of temporal-coupled perturbations\footnotemark, which can lead to significant reward degradation and potentially dangerous outcomes. 
Therefore, existing robust CRL methods fail to perceive and defend against optimal temporal-coupled adversaries. They face two challenges:

\footnotetext{Appendix provides two examples to illustrate the importance of considering temporal-coupled state perturbations.}

\noindent 1) \textbf{\textit{How to enable cost constraint functions to perceive worst-case temporal-coupled adversaries?}} Existing studies~\cite{lyu2019advances,zhang2021robust} model the worst-case attacker against the cost constraint function in CRL as a RL problem, wherein training agents against the learned attacker yields robust policies.
However, in real-world environments, adversaries do not co-train with the agent but rather directly corrupt the state observations. Consequently, CRL trained against a fixed temporal-coupled attack strategy exhibit limited robustness when facing uncertain adversaries, which may result in safety constraint violations.

\noindent 2) \textbf{\textit{How to effectively conduct robust training of CRL agents under temporal-coupled perturbation attacks?}}
Existing robust CRL methods~\cite{morimoto2005robust,tessler2019action} have been predominantly designed to address single-step perturbations, without explicitly modeling temporal robustness. Furthermore, the development of observational adversarial attack in existing methods fails to account for temporal-coupled effects~\cite{liu2022robustness,zhang2021robust}. These limitations prevent effective quantification of robustness under temporal-coupled perturbations and cause existing CRL defense mechanisms to collapse when facing such attacks.

To address these two challenges, we propose \texttt{TCRL}, a temporal-coupled adversarial training framework for robust constrained reinforcement learning in worst-case scenarios, which enables the agent to maintain high rewards while effectively avoiding violations of safety constraints under temporal-coupled attacks.
In response to the first challenge, this component incorporates a perceptual mechanism that explicitly accounts for the temporal-coupled nature of potential perturbations, allowing the agent to optimize its policy with respect to the worst-case safety costs within the temporal perturbation boundaries. By doing so, the safety constraints are effectively enforced without requiring explicit modeling of adversarial policy, thereby improving robustness against temporally-structured attacks on constraint signals.
To tackle the second challenge,
we establish a defense mechanism against temporal-coupled attacks towards the reward during the robust CRL training process. Specifically, based on the accumulated rewards over a time window, we introduce two reward constraints to disrupt the attacker's temporal-coupled patterns while maintaining the unpredictability of the reward. This further enhances the agent's robustness against temporal-coupled perturbation attacks.
The main contributions of this work are summarized as follows:
\begin{itemize}[leftmargin=*]
\item We propose a temporal-coupled adversarial training framework for robust constrained reinforcement learning in worst-case scenarios, \texttt{TCRL}, to defend against temporal-coupled perturbation attacks, enabling the agent to achieve high rewards without violating safety constraints.
\item We design a cost constraint function capable of directly estimating the worst-case safety constraints cost under temporal-coupled perturbations; based on this, we construct a loss function tailored for worst-case safety constraints cost robust training.
\item We introduce a dual-constraint defense mechanism towards the reward: a temporal correlation constraint and a reward entropy stability constraint.  The temporal correlation constraint disrupts the attacker's temporal-coupled attack patterns, while the entropy constraint preserves the unpredictability of the reward.
\item We conduct extensive experiments across multiple CRL tasks. Extensive experiments in four tasks have demonstrated that \texttt{TCRL} reduces the safety cost under temporal-coupled perturbation attacks by up to 190.77 times and increases the reward value up to 1.34 times. The result demonstrates that \texttt{TCRL} achieves superior robustness against temporal-coupled perturbation attacks.  
\end{itemize}


\begin{figure*}[ht]
\centering
\includegraphics[width=1.00\textwidth]{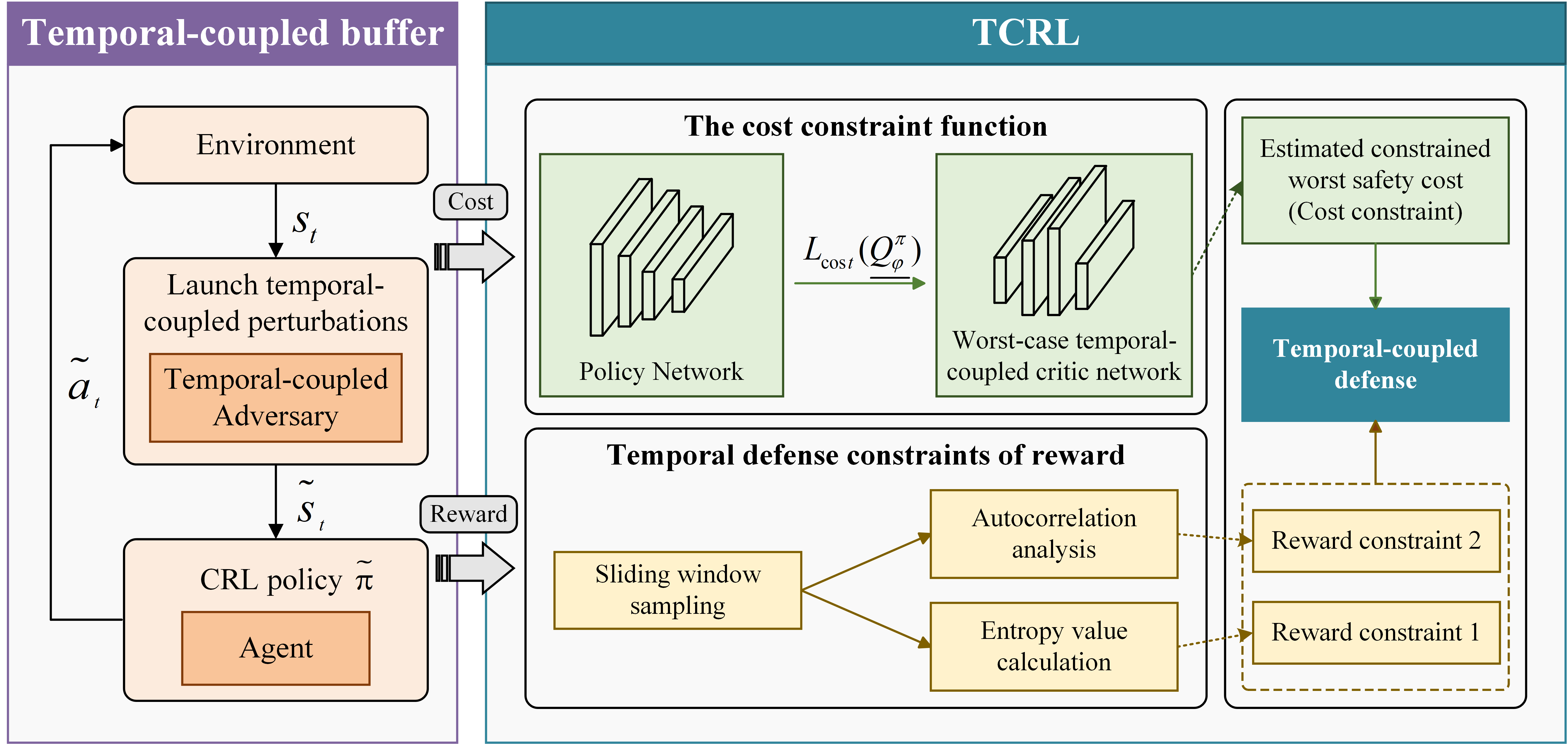}
\caption{\texttt{TCRL}'s framework.}

\label{fig:eurai}
\end{figure*}


\section{Related work}

\noindent\textbf{Robust CRL.}
CRL is commonly modeled using the constrained Markov decision process (CMDP) framework, where the goal is to learn a policy that maximizes expected rewards while satisfying predefined cost constraints~\cite{mankowitz2020robust,varagapriya2023joint,varagapriya2022constrained,borkar2014risk}.
A general approach to solving CMDPs is the Lagrangian-based method, which introduces additional Lagrange multipliers to penalize constraint violations~\cite{guo2011total,chow2018risk}. Another representative method is the Constrained Policy Optimization (CPO) algorithm~\cite{achiam2017constrained}, which ensures that the agent satisfies the constraints at every step during the learning process.
While these approaches focus on ensuring constraint satisfaction during training, they do not explicitly address the robustness of CRL under adversarial or uncertain perturbations.
Early studies on CRL robustness primarily addressed deterministic perturbations~\cite{ng2003integration,asadpoure2011robust}, whereas recent mainstream approaches focus on stochastic perturbations constrained within bounded regions~\cite{shi2024achieving,xie2005stochastic,feng2020robust}.

\noindent\textbf{Defending against Adversarial Perturbations on State Observations.}
 Some studies~\cite{zhang2020robust,shen2020deep,oikarinen2021robust,lutjens2020certified,oikarinen2021robust,fischer2019online} have investigated defense mechanisms against adversarial perturbations to state observations. Regularization-based methods ~\cite{zhang2020robust,shen2020deep,oikarinen2021robust}  enforce policies with similar outputs for similar inputs and have demonstrated verifiable improvements in the performance of deep $Q$ -networks (DQNs) on several Atari games. These methods may not reliably improve worst-case performance in continuous control tasks. 
Some studies aims to enhance the robustness of reinforcement learning by analyzing the boundary behavior of the action-value function $Q$. Specifically, existing studies~\cite{lutjens2020certified,oikarinen2021robust}  guarantee the robustness of single-step action selection by computing the lower bound of the $Q$ function. 
These methods fail to model adversarial attack-induced distributional bias in state transitions, causing even "safe" current actions to lead to vulnerable future states. Additionally, verification frameworks for discrete action spaces are hard to generalize to continuous control settings, with their inherent combinatorial explosion severely limiting practical applicability.
An existing study~\cite{fischer2019online} introduces a principled framework for robust reinforcement learning that formalizes worst-case robustness and can be incorporated into existing dynamic learning algorithms to improve their resilience.

Existing robust CRL methods focus on defending the reward function against adversarial attacks. However, these methods often fail to account for the stability of constraint satisfaction under adversarial perturbations, especially in complex settings involving continuous state-action spaces and temporally correlated attacks. 
\texttt{TCRL} effectively estimates the worst-case safety cost under temporally coupled perturbations by constructing a cost constraint function capable of perceiving worst-case scenarios. Meanwhile, \texttt{TCRL} employs a reward-based temporal defense constraint to mitigate such attacks, thereby enhancing the overall robustness of constrained reinforcement learning.


\section{Preliminary}

\subsection{CRL.} 
In the CRL, the process of the agent interacting with the environment is formulated as a Constrained Markov Decision Process (CMDP)~\cite{altman1998constrained}. The CMDP is defined as the tuple $\langle\mathcal{S}, \mathcal{A}, P, r, c, \gamma, \xi\rangle$, where $\mathcal{S}$ is the state space, $\mathcal{A}$ is the action space, $P(s'\mid s, a)$ is the important state transfer probability, $r \colon S \times A \to \mathbb{R}$ is the reward function, $c \colon S \times A \to \mathbb{R}^+$ is the immediate cost function, $\gamma \in [0,1)$ is the discount factor, and $\xi \colon \mathcal{S} \to \mathbb{R}^+$ is the initial state distribution. We denote this type of CRL problem as $\mathcal{N}_{\Pi}^{\eta}$, where $\eta$ is the preset safety constraint threshold. $ \Pi: \mathcal{S} \times \mathcal{A} \rightarrow[0,1]$ is the strategy class. The reward value function is $V_{r}^{\pi}(s_{t})=\mathbb{E}_{\pi,P} \left[ \sum_{t=0}^{\infty} \gamma^t r(s_t, a_t) |s_0=s\right]$ and the cost value function is $V_{c}^{\pi}(s_{t})=\mathbb{E}_{\pi,P} \left[ \sum_{t=0}^{\infty} \gamma^t c(s_t, a_t)|s_0=s \right]$, and the corresponding action-value function containing both reward and cost is $Q_{r}^{\pi}(s_t,a_t)=\mathbb{E}_{\pi,P} \left[ \sum_{t=0}^{\infty} \gamma^t r(s_t, a_t)  |s_0=s,a_0=a\right]$, $Q_{c}^{\pi}(s_t,a_t)=\mathbb{E}_{\pi,P} \left[ \sum_{t=0}^{\infty} \gamma^ t c(s_t, a_t) |s_0=s,a_0=a\right]$. The target strategy $\pi ^{\ast } $ of $\mathcal{N}_{\Pi}^{\eta}$ needs to be satisfied:
\begin{eqnarray}\label{eq:optimal_policy}
\pi^* & = & \arg\max_{\pi \in \Pi} V_{r}^{\pi}(s_{t})\ ,\text{s.t.}  V_{c}^{\pi}(s_{t}) \leq \eta.
\end{eqnarray} 
\subsection{Modeling State Observation Perturbation Attacks.}

In the CRL, the test-phase state observation perturbation attack serves as a real-time disrupter to manipulate the state observation channel of the intelligences through a bounded perturbation of the $\ell_{p}$ paradigm: at each time step $t$, the attacker perturbs the true state returned by the environment, $s_t$, into $\tilde{s}_{t} \in B_{\ell_{p}}\left(s_{t}, \epsilon\right)$, where $B_{\ell_p}(s_t, \epsilon) = \left\{ \tilde{s} \in \mathcal{S} \mid \|\tilde{s} - s_t\|_p \leq \epsilon \right\}$ is a perturbed neighborhood of p-norm constraints, and $\tilde{\pi}$ is the perturbed strategy.


\section{Methodology}
  
\subsection{Framework}
To solve the robustness limitations of constrained CRL training methods under temporal-coupled  perturbation attacks, we propose TCRL, a novel adversarial training framework tailored for robust constrained reinforcement learning in worst-case scenarios. Fig.~\ref{fig:eurai} illustrates the framework of \texttt{TCRL}. 
When the agent is subjected to a temporal-coupled perturbation attack, we collect the safety cost and reward of the agent.
In order to reduce the cost and enhance the reward, we perform two steps: (ii) constructing a cost constraint function and (ii) implementing a dual-constraint defense mechanism.
In the first step, we introduce a network designed to estimate the worst-case safety cost, which enables the construction of a specialized safety constraint function.
This cost constraint function enables effective estimation of the worst-case safety cost without explicitly constructing an attacker.
In the second step, we establish a dual-constraint defense mechanism on reward signals to effectively counteract temporal-coupled perturbation attacks. This mechanism ensures that the policy attains high rewards while satisfying safety cost constraints, thereby significantly enhancing the robustness of CRL.

\subsection{Temporal-coupled Perturbation Attack}

To introduce temporal coupling into the adversarial perturbations, we propose a novel perturbation method that incorporates perturbation intensity along the time dimension. 

We first design a temporal-coupled perturbation constraint parameter as a dynamic parameter so that it is automatically adjusted according to the magnitude of the perturbation at the previous moment.

\begin{definition}[Temporal-coupled Perturbation Constraint Parameter]\label{thm:perturbation}
 For time step $t$, the adaptive  temporal-coupled constraint parameter $\overline{\epsilon}$ is defined as
\begin{eqnarray}\label{eq:Bellman}
\bar{\epsilon}_{t}=\bar{\epsilon}_{t-1}+\alpha \cdot\left\|p_{t-1}\right\|, 
\end{eqnarray}
where $\alpha$ is the perturbation adjustment factor, $p_{t-1}$ is the perturbation vector at the previous moment, and $\left \| p_{t-1} \right \| $ is the magnitude of the perturbation at the previous moment $\ell _p$. 
\end{definition}

Based on the temporal-coupled perturbation constraint parameter, a temporal-coupled perturbation constraint is formulated as follows:
\begin{eqnarray}\label{eq:time}
\left\|p_{t}-p_{t-1}\right\| \leq \bar{\epsilon}_{t}.
\end{eqnarray}

We introduce the parameter $\alpha \in (0,1)$ to modulate sensitivity to the magnitude of perturbations. Here, $\bar{\epsilon}_{t}$ represents the update constraint on the perturbation magnitude. Since $\alpha >0$, $\bar{\epsilon}_{t}$ increases over a time period, enabling attackers to explore higher-intensity perturbation strategies.

Based on the constraint parameter $\bar{\epsilon}_t$, we further propose a temporal-coupled state observation perturbation method.

\begin{definition}[$\bar{\epsilon}_{t}$-Temporal-coupled State Perturbation]\label{thm:state perturbation}
In the context of temporal-coupled state perturbations, a perturbation $p_t$ is considered valid if it adheres to the temporal-coupled constraint $\bar{\epsilon}_{t}$, which is defined as $\bar{\epsilon}_{t}$:$\left \| s_{t}-\tilde{s}_{t}-\left(s_{t+1}-\tilde{s}_{t+1}\right) \right \| \le \bar{\epsilon}_{t}$, where $s_t$ and $s_{t+1}$ are the perturbed states obtained by adding $p_t$ and $p_{t+1}$ to $s_t$ and $s_{t+1}$ respectively.
\end{definition}

This perturbation method allows the adversary to incorporate temporal coupling across time steps.
By constraining perturbations in a bounded range and precluding abrupt alterations in direction, the attacker can execute successive and more potent attacks while maintaining a measure of stability.

\subsection{Cost Constraint Functions}

The goal of CRL is to find an optimal policy that maximizes the state value function $V$ subject to safety constraints.
However, the presence of observation noise and adversarial perturbations in real-world deployment environments poses a significant challenge to safety.
When state observations are compromised by sophisticated adversarial perturbations (e.g., MC and MR attackers proposed by Liu et al.~\cite{liu2022robustness}), relying solely on the nominal state-value function $V$ and action-value function $Q$ in conventional evaluation paradigms becomes insufficient to ensure safety.
To address this limitation, we propose a safety constraint evaluation method for robust CRL. 
In this method, when the observed spatial perturbations remain within a predefined attack budget, the worst-case safety cost trajectory generated by the policy over the entire decision horizon must consistently satisfy a predefined safety threshold.
This method provides a quantifiable criterion for evaluating the robustness of a policy in adversarial scenarios.

First, we formulate the single-step worst-case action cost value across different environments as follows:
\begin{eqnarray}\label{eq:Worst_c}
\underline{Q}^{\pi}_c\left(s,a\right) &:=& \mathbb{E}_{\pi,P} \left[ \sum_{t=0}^{\infty} \gamma^{t} c(s_{t},a^{\ast}_{t})  \right],
\end{eqnarray}
where $a^*_t$ is the worst-case action taken by state $s_t$ under a temporal-coupled perturbation attack and $\pi$ is the policy of agents. The worst-case cost value $\underline{V}^{\pi}_c$ can be define in the same way

To estimate the worst-case costs, we introduce a new worst-case cost Bellman operator.

\begin{definition}[Worst-case Cost Bellman Operator]\label{thm:Bellman}
In a given MDP, for a deterministic policy $\pi$ and a deterministic perturbation, the worst-case cost Bellman operator is defined as follows:
\begin{eqnarray}\label{eq:Bellman}
\underline{ \Gamma}^{\pi}Q_c(s, a) = \mathbb{E}_{s^{\prime} \sim P(s'\mid s, a)}\nonumber\\ \Bigg[ c(s, a) + \gamma \max_{a^{\prime} \in {\Omega}(s^{\prime}, \pi)} \underline{Q}^{\pi}_c(s^{\prime}, a^{\prime}) \Bigg],
\end{eqnarray}
where $\forall s\in\mathcal{S}$, $\Omega(s, \pi)$ is the set of all actions that the attacker may output by perturbing the inputs to $\tilde{s}\in\mathcal{B}_{\epsilon}(s')$ by perturbing the strategy $\pi$ in state $s'$.

\end{definition}

We formulate the action set outputted by corrupted states as follows:
\begin{eqnarray}\label{eq:adv}
{\Omega}(s, \pi):=\left\{a \in \mathcal{A}: \exists \tilde{s} \in \mathcal{B}_{\epsilon}(s)\right. s.t. \left.\pi(\tilde{s})=a\right\}.
\end{eqnarray}
The process of computing ${{\Omega}}_{\text{adv}}(s, \pi)$ involves identifying the potential actions that the policy $\pi$ might produce when the state $s$ undergoes temporal-coupled perturbations confined within $\mathcal{B}_{\epsilon}(s)$. This identification is achieved by leveraging convex relaxation techniques, which have been extensively employed in the analysis of neural networks  \cite{9010971,gowal2019effectivenessintervalboundpropagation,liang2022efficientadversarialtrainingattacking}. The details are in the Appendix.

We introduce the following theorem that relates the worst-case cost Bellman operator to the worst-case cost value function.

\begin{theorem}[Worst-case Cost Bellman Operator with Worst-case Cost Values]\label{thm:cost}
For any strategy $\pi$, the operator $\Gamma$ is a compression mapping whose unique immovable point ${Q}^{\pi}_c$ is the worst-cost value of the strategy under an attack on the bounded observation of the $\ell _p$-paradigm, where the upper bound on the attack radius is $\epsilon$.
\end{theorem}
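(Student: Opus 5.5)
We prove the statement in two steps: first that $\underline{\Gamma}^{\pi}$ is a $\gamma$-contraction in the sup-norm, and then, via the Banach fixed-point theorem, that its unique fixed point is the worst-case cost value $\underline{Q}^{\pi}_c$ of \eqref{eq:Worst_c}.

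\textbf{Contraction.} Throughout we assume bounded costs (so the $Q$-functions live in the Banach space $(\mathcal{B}(\mathcal{S}\times\mathcal{A}),\|\cdot\|_\infty)$) and that the set $\Omega(s',\pi)$ of \eqref{eq:adv} is nonempty for every $s'$; the latter holds since $\pi(s')\in\Omega(s',\pi)$. In the definition of $\underline{\Gamma}^{\pi}$ we read the inner term as the argument $Q$, so the operator acts on arbitrary $Q$. Fix two bounded functions $Q_1,Q_2$. The immediate cost $c(s,a)$ cancels in the difference, leaving
\begin{equation*}
\bigl|\underline{\Gamma}^{\pi}Q_1(s,a)-\underline{\Gamma}^{\pi}Q_2(s,a)\bigr| \le \gamma\,\mathbb{E}_{s'}\Bigl|\max_{a'\in\Omega(s',\pi)}Q_1(s',a')-\max_{a'\in\Omega(s',\pi)}Q_2(s',a')\Bigr|.
\end{equation*}
The elementary inequality $|\sup_{x\in X} f(x)-\sup_{x\in X} g(x)|\le \sup_{x\in X}|f(x)-g(x)|$, applied over $X=\Omega(s',\pi)$, bounds the right-hand side by $\gamma\|Q_1-Q_2\|_\infty$. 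We use $\sup$ rather than $\max$ in case $\Omega(s',\pi)$ is not compact. Taking the supremum over $(s,a)$ yields the contraction with modulus $\gamma<1$. Banach's theorem then gives a unique fixed point $Q^\ast$, and iterating $\underline{\Gamma}^{\pi}$ from any bounded initial $Q$ converges to it.

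\textbf{Identification with the worst-case value.} This step is the main obstacle, because \eqref{eq:Worst_c} only informally describes ``the worst-case action $a^\ast_t$.'' We formalize it as a supremum over adversaries. An adversary $\nu$ is any (possibly history-dependent) map choosing a perturbed observation $\tilde s_t\in\mathcal{B}_\epsilon(s_t)$, and the agent then plays $a_t=\pi(\tilde s_t)$. We define
\begin{equation*}
\underline{Q}^{\pi}_c(s,a)=\sup_\nu \mathbb{E}\Bigl[\textstyle\sum_{t\ge0}\gamma^t c(s_t,a_t)\Bigm| s_0=s,\ a_0=a\Bigr].
\end{equation*}
By construction, the set of actions reachable at $s'$ is exactly $\Omega(s',\pi)$. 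The problem is therefore an MDP, which we call the adversary's MDP, in which the adversary controls the action within $\Omega(s',\pi)$ and maximizes discounted cost. We then show $\underline{Q}^{\pi}_c=Q^\ast$ by two inequalities.

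\emph{Upper bound.} By induction on the horizon, every adversary's $n$-step truncated return is at most $(\underline{\Gamma}^{\pi})^n\mathbf{0}$. Letting $n\to\infty$ and using boundedness of the cost tail, $\underline{Q}^{\pi}_c\le Q^\ast$.

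\emph{Lower bound.} The adversary that at each $s'$ chooses an $\varepsilon$-maximizer of $Q^\ast(s',\cdot)$ over $\Omega(s',\pi)$ is a stationary Markov adversary. Its value $Q_\nu$ is the fixed point of a linear $\gamma$-contraction, and comparing $Q_\nu$ with $Q^\ast$ shows $Q_\nu\ge Q^\ast-\varepsilon/(1-\gamma)$. Letting $\varepsilon\to0$ gives $\underline{Q}^{\pi}_c\ge Q^\ast$.

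\textbf{Caveats.} Two points should be stated explicitly. First, the perturbation set used is the static ball $\mathcal{B}_\epsilon$ of radius $\epsilon$, as in the theorem statement. If one instead used the temporal-coupled budget $\bar\epsilon_t$ of Definition~\ref{thm:state perturbation}, the state would have to be augmented with $(p_{t-1},\bar\epsilon_t)$ to keep the problem Markov, and the same argument would then go through on the augmented state. Second, measurability of the $\varepsilon$-maximizing selection is assumed; it is trivial for finite or discretized action sets.
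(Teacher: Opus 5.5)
Your proof is correct. Its first half coincides with the paper's: the immediate cost cancels, and $|\sup f-\sup g|\le\sup|f-g|$ over $\Omega(s',\pi)$ gives modulus $\gamma$. The paper's displayed computation uses $\min$ over $\Omega(s',\pi)$, which is inconsistent with the $\max$ in the operator's definition; your $\max$/$\sup$ is the one that matches a worst-case cost, and the contraction estimate is unaffected by the choice.

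The genuine difference is in identifying the fixed point. The paper argues in one line that, because the attacker forces the worst achievable action in $\Omega$, the worst-case value satisfies $\underline{Q}^{\pi}_c=\underline{\Gamma}^{\pi}\underline{Q}^{\pi}_c$, and then lets uniqueness conclude. You instead define $\underline{Q}^{\pi}_c$ as a supremum over possibly history-dependent adversaries and sandwich it against the Banach fixed point $Q^\ast$:
\begin{itemize}
\item for the upper bound, induction on $(\underline{\Gamma}^{\pi})^n\mathbf{0}$;
\item for the lower bound, an $\varepsilon$-greedy stationary adversary, compared with $Q^\ast$ via monotonicity of its own evaluation operator.
\end{itemize}
The paper's route is shorter, but it takes the Bellman equation for the worst-case value as given; that equation is the principle of optimality for the attacker's MDP. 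Your route actually proves it. It also covers randomized and history-dependent attackers, and it does not need the maximum over $\Omega(s',\pi)$ to be attained.

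Your caveat on $\bar{\epsilon}_t$ is also apt. The paper's proof appeals to the temporal-coupled attacker, while the operator uses the static ball $\mathcal{B}_\epsilon$. For a genuinely coupled budget, the fixed point of $\underline{\Gamma}^{\pi}$ is therefore at most an upper bound, and only when the perturbations stay in $\mathcal{B}_\epsilon(s_t)$. Exact identification requires the augmented state you describe.
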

\begin{proof}
A full proof can be found in the Appendix.
\end{proof}

The proof of theorem~\ref{thm:cost} in the Appendix shows that the highest possible cumulative cost of a strategy under a bounded observation attack can be computed by the worst-case cost Bellman operator. To compute the worst-case cost value, we train a worst-case cost network $\underline{Q}^{\pi}_{\varphi}$. Given a mini-batch $\left\{s_{t}, a_{t}, r_{t}, s_{t+1}\right\}_{t=1}^{N}$, $\underline{Q}^{\pi}_{\varphi}$ is optimized using the following loss function:
\begin{eqnarray}\label{eq:cost loss training}
\mathcal{L}_{\text {cost }}\left(\underline{Q}^{\pi}_{\varphi}\right):=\frac{1}{N} \sum_{t=1}^{N}\left(\underline{c}_{t}-\underline{Q}^{\pi}_{\varphi}\left(s_{t}, a_{t}\right)\right)^{2}, 
\end{eqnarray}
where $\underline{c}_{t}=c_{t}+\gamma \min _{a^{\prime} \in {\Omega}\left(s_{t+1}, \pi\right)} \underline{Q}^{\pi}_{\varphi}\left(s_{t+1}, {a^{\prime}}\right)$.

The corresponding worst-cost value is computed as $\underline{V}^{\pi}_c(s)=\max _{a \in {\Omega}(s, \pi)} \underline{Q}^{\pi}_{\varphi}(s, a)$.
Consequently, the original CRL constraint (expected cost $\le$$ C_{threshold}$) is transformed into a worst-case constraint:
\begin{eqnarray}\label{eq:yueshu}
\quad \underline{V}^{\pi}_c \leq C_{\text {threshold}}.
\end{eqnarray}

\subsection{Reward-based Temporal Defense Constraints}

Temporal-coupled attackers targeting CRL policies exploit an agent's historical reward data to model its reward function, generating perturbations that reduce rewards and intensify over time. 
To ensure CRL policy optimality under attack, a dual temporal attacker defense mechanism with two reward-based constraints is proposed: 
(i)limiting reward temporal correlation to hinder attackers from accurately modeling the function via historical data and preventing continuous reward degradation; 
and (ii) maintaining reward function unpredictability, ensuring the agent's original policy does not change significantly even if attackers manipulate rewards at a time step, thus stopping attackers from predicting subsequent rewards.

\noindent\textbf{Autocorrelation Constraints Based on Reward Values.}\label{sec:constrains}
To address temporal-coupled perturbation attacks from adversaries in adversarial environments, we propose a reward-based temporal decoupling optimization method with an autocorrelation constraint.
This method enhances strategy robustness by imposing a weak correlation constraint on the reward sequence, thereby disrupting the adversary's ability to model temporal coupling in system dynamics.

Let the agent's strategy be $\pi$,
the corrupted strategy is $\tilde{\pi}$, the attacker causes the agent to perform the action $\tilde{a}_t\in{\Omega}(\tilde{s}_t, \tilde{\pi})$ by perturbing the observed state $\tilde{s}_t\in\mathcal{B}_{\epsilon}(s)$, and the obtained reward value of $\tilde{r}_t$.
The temporal correlation penalty term in the time window $w$ is:
\begin{eqnarray}\label{eq:reward constraint1}
C_{\mathrm{corr}}=\frac{1}{w^{2}} \sum_{k=1}^{w} \sum_{l=1}^{k}\left|\phi\left(\tilde{r}_{t}, \tilde{r}_{t-l}\right)\right|\leq\epsilon_{corr},
\end{eqnarray}
where $\phi$ is the autocorrelation function.

The autocorrelation function $\phi:\mathbb{R}^{2}\rightarrow[-1,1]$ is in standardized covariance form:
\begin{eqnarray}\label{eq:zixiangguan}
\phi\left(\tilde{r}_{t}, \tilde{r}_{t-l}\right)=\frac{\mathbb{E}\left[\left(\tilde{r}_{t}-\mu_{t}\right)\left(\tilde{r}_{t-l}-\mu_{t-l}\right)\right]}{\sigma_{t} \sigma_{t-l}},
\end{eqnarray}

where $\mu_{t}=\mathbb{E}[\tilde{r}_t]$ and $\sigma^{2}_{t}=Var(\tilde{r}_t)$ denote the expectation and and variance of the reward received by the agent under temporally coupled perturbation attacks, respectively.
By enforcing the autocorrelation coefficient to remain below a threshold $\epsilon_{corr}$, this constraint restricts the attacker's ability to exploit temporal dependencies, thereby disrupting temporal-coupled perturbation mechanisms.

To overcome the optimization challenges arising from the non-differentiability of the conventional correlation coefficient, a microscopic approximation with a gradient correction mechanism is proposed:
\begin{eqnarray}\label{eq:gradient}
\nabla_{\theta} \mathcal{L}_{\text {corr }}=\lambda \sum_{l=1}^{L} \frac{\partial}{\partial \theta}\left[\phi\left(\tilde{r}_{t}, \tilde{r}_{t-l}\right)^{2}\right],
\end{eqnarray}
where $\lambda$ is the regularization factor.

\noindent\textbf{Entropy Stability Constraints Based on Reward Values.}
To effectively regulate the exploration behavior of strategies in dynamic environments and to prevent instability caused by abrupt changes in the reward distribution, we introduce an entropy stability constraint. 
In practice, this constraint mitigates over-adjustment to environmental changes, thereby preserving the stability and efficiency of the learning process.
In the reward signal discretization process, the reward space $\mathcal{R} \in [\tilde{r}_{\min},\ \tilde{r}_{\max}]$ over consecutive time steps is divided into $N$ equal-width bins, and the empirical distribution within a sliding window of size $w$ is computed following the method in autocorrelation constraints, as follows.
\begin{eqnarray}\label{eq:estimate}
p_{i}^{(w)}(t)=\frac{1}{w} \sum_{k=t-w+1}^{t} \mathbb{I}\left(\tilde{r}_{k} \in \mathcal{D}_{i}\right),
\end{eqnarray}
where $\mathcal{D}_i = \left[ \tilde{r}_{min} + (i - 1) \Delta r, \tilde{r}_{min} + i \Delta r \right)$ is the $i$-th reward interval, and $\Delta i=(\tilde{r}_{max}-\tilde{r} _{min})/N$. The corresponding time-varying entropy function, based on this distribution, is expressed as follows:
\begin{eqnarray}\label{eq:function}
H_t = -\sum_{i=1}^{N} p_i^{(w)}(t) \log p_i^{(w)}(t).
\end{eqnarray}
To prevent abrupt changes in the reward distribution caused by temporally coupled perturbation attacks, entropy variation rate constraints are introduced as follows:
\begin{eqnarray}\label{eq:reward constraint2}
\mathcal{C}_{\text{ent}} = \left\| {H_t - H_{t-1}} \right\|_{\infty} \leq \epsilon_{\text{ent}},
\end{eqnarray}

where $\epsilon_{\text{ent}}$ is the maximum allowed rate of entropy change. This constraint ensures that entropy changes between consecutive time steps stay within a bounded range, promoting a smooth evolution of the reward distribution.

\subsection{Practical Implementation}

We describe the practical implementation of \texttt{TCRL}.
We optimize the strategy in the state affected by the attacker, denoted as $\tilde{\pi}$. 
The agent’s interaction with the environment under attack yields a trajectory $\tilde{\tau}=\left\{\tilde{s}_{0}, \tilde{a}_{0}, \tilde{s}_{1}, \tilde{a}_{1}, \ldots\right\}$, where each action $\tilde{a}_t \in {\Omega}(\tilde{s}_t, \tilde{\pi})$.
The worst-case safety cost, estimated under worst-case perception, is constrained by a security threshold. Additionally, we introduce a reward-based dual defense constraint into the adversarial training process.
Accordingly, we formulate the CRL objective under the temporal- coupled perturbation attacker $h$ as follows:
\begin{align}\label{eq:safe RL}
\pi_{\text{adv}} = & \max_{\pi} \ \mathbb{E} \left[ \sum_{t=0}^{T} \gamma^t {r}_t \right] \nonumber \\
\text{s.t.} & \ \underline{V}_{c}^{\tilde{\pi}} \leq \eta, \ \mathcal{C}_{\text{corr}} \leq \epsilon_{\text{corr}}, \ \mathcal{C}_{\text{ent}} \leq \epsilon_{\text{ent}} ,\forall h.
\end{align}
We incorporate the constraints into strategy optimization using Lagrangian relaxation, as follows:
\begin{align}\label{eq:non_constrain}
&(\pi _{adv},\lambda _{adv})=\max_{\pi } \min_{\lambda \geq 0} \mathbb{E} \left[ \sum_{t=0}^{T} \gamma^t {r}_t \right]-\lambda _{cost}(\underline{V}_{c}^{\tilde{\pi}}-\eta ) \nonumber\\
&- \lambda_{\text{corr}} (\mathcal{C}_{\text{corr}} - \epsilon_{\text{corr}}) - \lambda_{\text{ent}} (\mathcal{C}_{\text{ent}} - \epsilon_{\text{ent}}).
\end{align}
The Lagrangian algorithm is derived by solving the inner maximization (primal update) using a policy optimization method and performing the outer minimization (dual update) via gradient descent. Given appropriate learning rates and bounded noise, the iterates $(\pi_{\text{adv}}, \lambda_{\text{adv}})$ converge almost surely to a fixed point corresponding to a local minimum. The pseudocode of \texttt{TCRL} algorithm is shown in Appendix.

\section{Experiment}
\begin{table*}[t]
\caption{The natural performance (in the absence of attacks) of various training methods and the performance under four additional types of adversarial attacks. Each experimental result is averaged over 50 episodes and 10 random seeds, and is reported as the mean $\pm$ standard deviation. For each environment, we highlight (by shading) the agent with the lowest safety cost and the best overall performance.}

\centering
\resizebox{\textwidth}{!}{ 
\renewcommand{\arraystretch}{1.3}
\begin{tabular}{|c|c|cc|cc|cc|cc|cc|}
\hline
                               &                          & \multicolumn{2}{c|}{Natural}                                           & \multicolumn{2}{c|}{MAD}                                                & \multicolumn{2}{c|}{MC}                                                  & \multicolumn{2}{c|}{Random}                                            & \multicolumn{2}{c|}{Worst-TC}                    \\ \cline{3-12} 
\multirow{-2}{*}{\textbf{Env}} & \multirow{-2}{*}{Method} & \multicolumn{1}{c|}{Reward}        & Cost                              & \multicolumn{1}{c|}{Reward}         & Cost                              & \multicolumn{1}{c|}{Reward}         & Cost                               & \multicolumn{1}{c|}{Reward}        & Cost                              & \multicolumn{1}{c|}{Reward}        & Cost                              \\ \hline
                               & PPOL-vanilla             & \multicolumn{1}{c|}{807.66$\pm$183.5}  & 7.78$\pm$0.98                         & \multicolumn{1}{c|}{728.05$\pm$1483.51} & 20.80$\pm$76.32                       & \multicolumn{1}{c|}{903.98$\pm$595.35}  & 76.36$\pm$5.61                         & \multicolumn{1}{c|}{813.45$\pm$319.63} & 7.16$\pm$15.77                        & \multicolumn{1}{c|}{630.73$\pm$234.36} & 75.44$\pm$6.73                        \\
                               & PPOL-random              & \multicolumn{1}{c|}{764.18$\pm$25.91}  & 1.43$\pm$5.37                         & \multicolumn{1}{c|}{727.93$\pm$1750.65} & 21.36$\pm$75.31                       & \multicolumn{1}{c|}{830.64$\pm$340.04}  & 65.30$\pm$11.37                        & \multicolumn{1}{c|}{776.29$\pm$263.83} & 1.16$\pm$3.13                         & \multicolumn{1}{c|}{600.31$\pm$170.30} & 72.80$\pm$8.04                        \\
                               & ADV-PPOL(MC)             & \multicolumn{1}{c|}{559.14$\pm$258.29} & 0.82$\pm$3.01                         & \multicolumn{1}{c|}{543.56$\pm$700.39}  & 0.38$\pm$1.46                         & \multicolumn{1}{c|}{724.32$\pm$288.51}  & \cellcolor[HTML]{C0C0C0}8.58$\pm$39.47 & \multicolumn{1}{c|}{566.98$\pm$263.83} & 0.44$\pm$1.48                         & \multicolumn{1}{c|}{521.57$\pm$374.80} & 25.32$\pm$27.69                       \\ \cline{2-12} 
\multirow{-4}{*}{Ball-Circle}  & TCRL-PPOL                & \multicolumn{1}{c|}{669.34$\pm$206.81} & \cellcolor[HTML]{C0C0C0}0.00$\pm$0.00 & \multicolumn{1}{c|}{596.45$\pm$767.27}  & \cellcolor[HTML]{C0C0C0}0.00$\pm$0.00 & \multicolumn{1}{c|}{754.40$\pm$396.59}  & 15.56$\pm$8.01                         & \multicolumn{1}{c|}{550.30$\pm$264.68} & \cellcolor[HTML]{C0C0C0}0.00$\pm$0.00 & \multicolumn{1}{c|}{709.40$\pm$253.81} & \cellcolor[HTML]{C0C0C0}3.84$\pm$6.45 \\ \hline
                               & PPOL-vanilla             & \multicolumn{1}{c|}{1132.48$\pm$28.65} & 4.55$\pm$6.58                         & \multicolumn{1}{c|}{1027.35$\pm$24.91}  & 48.98$\pm$161.37                      & \multicolumn{1}{c|}{1208.97$\pm$30.67}  & 10.59$\pm$2.06                         & \multicolumn{1}{c|}{1090.53$\pm$56.17} & 5.20$\pm$1.03                         & \multicolumn{1}{c|}{1058.42$\pm$24.91} & 241.54$\pm$17.11                      \\
                               & PPOL-random              & \multicolumn{1}{c|}{1119.91$\pm$1.09}  & \cellcolor[HTML]{C0C0C0}0.00$\pm$0.00 & \multicolumn{1}{c|}{1059.68$\pm$8.38}   & 26.92$\pm$10.62                       & \multicolumn{1}{c|}{1244.07$\pm$2.62}   & 141.37$\pm$0.79                        & \multicolumn{1}{c|}{1098.36$\pm$2.33}  & 0.18$\pm$0.41                         & \multicolumn{1}{c|}{1044.45$\pm$1.76}  & 140.37$\pm$0.79                       \\
                               & ADV-PPOL(MC)             & \multicolumn{1}{c|}{1007.77$\pm$76.88} & \cellcolor[HTML]{C0C0C0}0.00$\pm$0.00 & \multicolumn{1}{c|}{968.39$\pm$67.11}   & \cellcolor[HTML]{C0C0C0}0.00$\pm$0.00 & \multicolumn{1}{c|}{1154.71$\pm$253.80} & \cellcolor[HTML]{C0C0C0}4.58$\pm$7.77  & \multicolumn{1}{c|}{972.83$\pm$44.87}  & \cellcolor[HTML]{C0C0C0}0.00$\pm$0.00 & \multicolumn{1}{c|}{961.73$\pm$50.87} & 29.32$\pm$20.01                       \\ \cline{2-12} 
\multirow{-4}{*}{Ball-Run}     & TCRL-PPOL                & \multicolumn{1}{c|}{1031.59$\pm$9.89}  & \cellcolor[HTML]{C0C0C0}0.00$\pm$0.00 & \multicolumn{1}{c|}{971.26$\pm$35.13}   & \cellcolor[HTML]{C0C0C0}0.00$\pm$0.00 & \multicolumn{1}{c|}{1154.83$\pm$23.59}  & 6.62$\pm$7.44                          & \multicolumn{1}{c|}{990.67$\pm$27.69}  & \cellcolor[HTML]{C0C0C0}0.00$\pm$0.00 & \multicolumn{1}{c|}{1154.93$\pm$33.51} & \cellcolor[HTML]{C0C0C0}1.56$\pm$1.41 \\ \hline
                               & PPOL-vanilla             & \multicolumn{1}{c|}{174.84$\pm$20.03}  & 0.18$\pm$1.59                         & \multicolumn{1}{c|}{175.13$\pm$28.82}   & 0.38$\pm$5.72                         & \multicolumn{1}{c|}{197.35$\pm$15.44}   & 25.58$\pm$35.77                        & \multicolumn{1}{c|}{174.63$\pm$48.77}  & 0.22$\pm$2.37                         & \multicolumn{1}{c|}{151.28$\pm$46.69}  & 2.72$\pm$3.41                         \\
                               & PPOL-random              & \multicolumn{1}{c|}{156.29$\pm$24.68}  & 0.48$\pm$4.37                         & \multicolumn{1}{c|}{148.34$\pm$75.83}   & 1.38$\pm$16.08                        & \multicolumn{1}{c|}{158.24$\pm$32.05}   & 40.70$\pm$77.29                        & \multicolumn{1}{c|}{151.39$\pm$38.06}  & 0.38$\pm$3.72                         & \multicolumn{1}{c|}{150.39$\pm$32.28}  & 38.42$\pm$27.04                       \\
                               & ADV-PPOL(MC)             & \multicolumn{1}{c|}{153.50$\pm$35.03}  & 3.70$\pm$8.44                         & \multicolumn{1}{c|}{109.50$\pm$43.42}   & 5.38$\pm$3.20                         & \multicolumn{1}{c|}{114.44$\pm$42.41}   & \cellcolor[HTML]{C0C0C0}2.74$\pm$3.03  & \multicolumn{1}{c|}{112.92$\pm$37.55}  & 3.28$\pm$3.72                         & \multicolumn{1}{c|}{115.59$\pm$60.51}  & 2.80$\pm$4.64                         \\ \cline{2-12} 
\multirow{-4}{*}{Car-Circle}   & TCRL-PPOL                & \multicolumn{1}{c|}{160.51$\pm$26.22}  & \cellcolor[HTML]{C0C0C0}0.00$\pm$0.00 & \multicolumn{1}{c|}{159.06$\pm$34.02}   & \cellcolor[HTML]{C0C0C0}0.00$\pm$0.00 & \multicolumn{1}{c|}{178.41$\pm$35.00}   & 7.58$\pm$8.08                          & \multicolumn{1}{c|}{179.38$\pm$30.92}  & \cellcolor[HTML]{C0C0C0}0.00$\pm$0.00 & \multicolumn{1}{c|}{177.19$\pm$41.53}  & \cellcolor[HTML]{C0C0C0}2.38$\pm$1.53 \\ \hline
                               & PPOL-vanilla             & \multicolumn{1}{c|}{813.63$\pm$34.47}  & \cellcolor[HTML]{C0C0C0}0.00$\pm$0.00 & \multicolumn{1}{c|}{769.89$\pm$13.49}   & 0.22$\pm$1.93                         & \multicolumn{1}{c|}{881.62$\pm$48.90}   & 282$\pm$18.98                          & \multicolumn{1}{c|}{808.08$\pm$20.02}  & 54.52$\pm$13.06                       & \multicolumn{1}{c|}{683.53$\pm$49.84}  & 30.78$\pm$7.95                         \\
                               & PPOL-random              & \multicolumn{1}{c|}{817.78$\pm$0.98}   & \cellcolor[HTML]{C0C0C0}0.00$\pm$0.00 & \multicolumn{1}{c|}{792.69$\pm$6.49}    & 1.20$\pm$2.32                         & \multicolumn{1}{c|}{870.49$\pm$0.51}    & 58.58$\pm$79.6                         & \multicolumn{1}{c|}{813.29$\pm$2.18}   & \cellcolor[HTML]{C0C0C0}0.00$\pm$0.00 & \multicolumn{1}{c|}{670.37$\pm$0.50}   & 34.52$\pm$16.81                       \\
                               & ADV-PPOL(MC)             & \multicolumn{1}{c|}{761.68$\pm$1.10}   & \cellcolor[HTML]{C0C0C0}0.00$\pm$0.00 & \multicolumn{1}{c|}{703.02$\pm$36.92}   & 0.06$\pm$0.74                         & \multicolumn{1}{c|}{855.55$\pm$0.73}    & \cellcolor[HTML]{C0C0C0}0.44$\pm$0.25  & \multicolumn{1}{c|}{753.66$\pm$5.16}   & \cellcolor[HTML]{C0C0C0}0.00$\pm$0.00 & \multicolumn{1}{c|}{597.73$\pm$0.39}   & 0.36$\pm$0.23                         \\ \cline{2-12} 
\multirow{-4}{*}{Car-Run}      & TCRL-PPOL                & \multicolumn{1}{c|}{759.92$\pm$1.11}   & \cellcolor[HTML]{C0C0C0}0.00$\pm$0.00 & \multicolumn{1}{c|}{683.60$\pm$24.40}   & \cellcolor[HTML]{C0C0C0}0.00$\pm$0.00 & \multicolumn{1}{c|}{861.88$\pm$0.66}    & 0.72$\pm$0.24                          & \multicolumn{1}{c|}{751.02$\pm$3.41}   & \cellcolor[HTML]{C0C0C0}0.00$\pm$0.00 & \multicolumn{1}{c|}{899.87$\pm$0.49}   & \cellcolor[HTML]{C0C0C0}0.18$\pm$0.26 \\ \hline

\end{tabular}}
\label{tab:results}\end{table*}
\subsection{Adversarial Attacker}

To assess the robustness of \texttt{TCRL} against temporal-coupled perturbations, we design a well-trained worst-case temporal-coupled attacker (Worst-TC).
In addition, we include three well-trained adversarial baselines without temporal-coupled perturbations to assess whether \texttt{TCRL} is also robust against temporal-independent perturbations, all constrained by the same $\ell_{\infty}$-norm perturbation set $\mathcal{B}^{\epsilon}_{\infty}$.

\begin{itemize}[leftmargin=*]

\item \noindent \textbf{Worst-case Temporal-coupled (Worst-TC) attacker}. 
The attacker is constructed by integrating the worst-case cost estimation network with a reward minimization estimator, enabling adversarial behavior that simultaneously maximizes cost and minimizes reward. 
Let the agent's strategy be denoted by $\pi$. The attacker perturbs the observed state $\tilde{s}_t \in \mathcal{B}_{\epsilon}(s_t)$ using the temporal-coupled perturbation attack described in Methodology, causing the agent to take an action $\tilde{a}_t \in {\Omega}(\tilde{s}_t, \pi)$. The goal of the worst-case temporally coupled attacker is to minimize the agent’s cumulative reward while maximizing its cumulative cost through such perturbations. 
For a given strategy $\pi$ and its critic $Q ^ {\pi} _r \left (s_ {t}, a \right) $ and ${Q} ^ {\pi} _c \left (s_ {t}, a \right) $, we formulate the attackers as follows:
\begin{eqnarray}\label{eq:attacker}
\mathcal{H}(s)=\arg \min _{a \in {\Omega}({s}_t, \pi)}\left[Q^{\pi}_r\left(s_{t}, a\right)-\lambda {Q}^{\pi}_c\left(s_{t}, a\right)\right]
\end{eqnarray}
where $Q^{\pi}_r(s,a)$ is the standard action value function of the agent, $C^{\pi}\left(s_{t}, a\right)={Q}^{\pi}_c\left(s_{t}, a\right)$ is the cost constraint function, and $\lambda < 1$ is a weighting parameter that controls the importance of the cost.

\item \noindent \textbf{Random attacker baseline}. This method emulates naive attack behavior by applying random perturbations to the agent's observation or action space, without any specific objective or strategy.

\item \noindent \textbf{Max cost (MC) attacker baseline}. 
This method is designed to increase the safety cost of the trained agent, thereby increasing the likelihood of violating safety constraints and inducing unsafe behavior. It generates adversarial observations by maximizing the expected cost over the perturbed state space, expressed as: $\nu _{MC}(s)=arg\max_{\tilde{s}\in \mathcal{B}_{p}^{\epsilon } (s) } \mathbb{E}_{\tilde{a}\sim \pi (a|\tilde{s})}[Q_ {c}^{\pi }(s,\tilde{a}) ]$.

\item \noindent \textbf{Maximum Action Difference (MAD) attacker baseline}. The MAD attacker has been shown to be effective in reducing the reward value of trained reinforcement learning agents. It obtains optimal adversarial observations by maximizing the KL scatter between the corrupted strategies. The attack strategy is expressed as: $\nu _{MAD}(s)=arg\max_{\tilde{s}\in \mathcal{B}_{p}^{\epsilon } (s) } D_{KL}[\pi (a|\tilde{s})\left |  \right |\pi (a|s) ]$.

\end{itemize}

\subsection{Experimental Settings}

\noindent\textbf{Implementation details.} For the evaluated tasks, we selected robotic motion control as the test domain. The simulated environments are adopted from a previous benchmark~\cite{gronauer2022bullet}.

\begin{table*}[t]
\centering
\caption{Ablation study on Ball-Circle, Ball-Run, Car-Circle and Car-Run}
\vspace{-1.5mm}
\resizebox{\textwidth}{!}{ 
\renewcommand{\arraystretch}{1.1}
\begin{tabular}{|c|c|cc|c|c|cc|}
\hline
                              &                          & \multicolumn{2}{c|}{Worst-TC}                                                            &                            &                          & \multicolumn{2}{c|}{Worst-TC}                                                            \\ \cline{3-4} \cline{7-8} 
\multirow{-2}{*}{Env}         & \multirow{-2}{*}{Method} & \multicolumn{1}{c|}{Reward}                 & Cost                                       & \multirow{-2}{*}{Env}      & \multirow{-2}{*}{Method} & \multicolumn{1}{c|}{Reward}                 & Cost                                       \\ \hline
                              & TCRL-Worstcost           & \multicolumn{1}{c|}{698.76$\pm$176.78}          & \cellcolor[HTML]{FFFFFF}17.54$\pm$10.21        &                            & TCRL-Worstcost           & \multicolumn{1}{c|}{1168.51$\pm$2.78}           & \cellcolor[HTML]{FFFFFF}27.60$\pm$53.44        \\
                              & TCRL-reward              & \multicolumn{1}{c|}{588.28$\pm$219.53}          & \cellcolor[HTML]{FFFFFF}0.00$\pm$0.00          &                            & TCRL-reward              & \multicolumn{1}{c|}{1065.55$\pm$17.43}          & \cellcolor[HTML]{FFFFFF}2.73$\pm$5.27          \\
\multirow{-3}{*}{Ball-Circle} & \textbf{TCRL}            & \multicolumn{1}{c|}{\textbf{709.40$\pm$253.81}} & \cellcolor[HTML]{FFFFFF}\textbf{3.84$\pm$6.45} & \multirow{-3}{*}{Ball-Run} & \textbf{TCRL}            & \multicolumn{1}{c|}{\textbf{1154.93$\pm$33.51}} & \cellcolor[HTML]{FFFFFF}\textbf{1.56$\pm$1.41} \\ \hline
                              & TCRL-Worstcost           & \multicolumn{1}{c|}{163.66$\pm$11.05}           & \cellcolor[HTML]{FFFFFF}4.75$\pm$15.71         &                            & TCRL-Worstcost           & \multicolumn{1}{c|}{862.03$\pm$27.46}           & \cellcolor[HTML]{FFFFFF}20.52$\pm$19.89        \\
                              & TCRL-reward              & \multicolumn{1}{c|}{125.73$\pm$53.03}           & \cellcolor[HTML]{FFFFFF}2.46$\pm$4.25          &                            & TCRL-reward              & \multicolumn{1}{c|}{744.91$\pm$1.21}            & \cellcolor[HTML]{FFFFFF}0.00$\pm$0.00          \\
\multirow{-3}{*}{Car-Circle}  & \textbf{TCRL}            & \multicolumn{1}{c|}{\textbf{177.19$\pm$41.53}}  & \cellcolor[HTML]{FFFFFF}\textbf{2.38$\pm$1.53} & \multirow{-3}{*}{Car-Run}  & \textbf{TCRL}            & \multicolumn{1}{c|}{\textbf{899.87$\pm$0.49}}   & \cellcolor[HTML]{FFFFFF}\textbf{0.18$\pm$0.26} \\ \hline
\end{tabular}}
\label{tab:Ablation}
\vspace{-1.5mm}
\end{table*}

\begin{itemize}[leftmargin=*]
\item \noindent \textbf{Circle task}. 
This task targets two objectives: (i) accurate trajectory tracking and (ii) safe obstacle avoidance, as the agent navigates around a central circular hazard zone. The agent must perceive environmental information in real time and employ advanced perception and decision-making algorithms to strictly avoid entering the hazardous area and effectively prevent collisions with obstacles, thereby ensuring system safety and task reliability.

\item \noindent \textbf{Run task}.
This task evaluates the agent’s ability to operate efficiently within a movement space constrained by dual safety boundaries. The agent is rewarded based on its speed, path efficiency, and related performance metrics, provided it maintains high-speed motion within the feasible region through effective planning and control strategies. Negative rewards (i.e., cost penalties) are incurred if the agent crosses a safety boundary, enters a hazardous area, or exceeds its designated speed limit, with penalties scaled by the severity and duration of the violation.
\end{itemize}

We adopt two types of robots (Ball and Car) and assign them to two tasks (Circle and Run), yielding four Robot-Task combinations: (i) \underline{\textbf{\textit{Ball-Circle}}}, (ii) \underline{\textbf{\textit{Ball-Run}}}, (ii) \underline{\textbf{\textit{Car-Circle}}}, and (iv) \underline{\textbf{\textit{Car-Run}}}.

To validate the effectiveness of \texttt{TCRL}, we integrate it with the PID-PPO-Lagrange(PPOL) framework~\cite{stooke2020responsive} to form \texttt{TCRL-PPOL}. More details about PPOL are shown in Appendix.




\noindent\textbf{Baselines.}
We present three baseline robust training methods, which are alternately trained alongside adversarial attackers to enhance robustness.
\begin{itemize}[leftmargin=*]
\item \noindent \textbf{PPOL-vanilla} adopts the PID-PPO-Lagrange method~\cite{stooke2020responsive} and does not incorporate any form of robustness training.

\item \noindent \textbf{PPOL-random} is trained with attackers perturbed by random noise.

\item \noindent \textbf{ADV-PPOL(MC)} is a state-based adversarial training method that employs a carefully designed Max-Cost (MC) attacker.
\end{itemize}

All methods were implemented using the same PPOL algorithm and hyperparameter settings across test environments to ensure a fair comparison.

\vspace{-3mm}
\subsection{Performance of CRL with Robust Training}

We evaluated three baseline training methods and our proposed \texttt{TCRL-PPOL} under five conditions: (i) no perturbation, (ii) MAD attacker, (iii) Max-Cost (MC) attacker, (iv) random attacker, and (v) worst-case temporally coupled (Worst-TC) attacker.

Notably, a superior method should achieve high rewards and low safety costs under diverse attacks, allowing agents to complete tasks with higher quality while maintaining safety. To this end, we jointly consider both reward and safety cost in our evaluation.

Table~\ref{tab:results} reports that \texttt{TCRL-PPOL} outperforms all three baselines in defending against state perturbations across the four tasks (Ball-Circle, Ball-Run, Car-Circle and Car-Run). Moreover, we make these observations:

First, we observe that although most baselines achieve near-zero or relatively low safety costs under natural conditions, their safety performance degrades significantly when subjected to the Worst-TC attacker, which employs temporal-coupled perturbations.
\texttt{TCRL} consistently demonstrates superior safety performance compared to all baselines, achieving the lowest safety cost even under the temporal-coupled perturbations introduced by the Worst-TC attacker. Notably, \texttt{TCRL} achieves higher rewards under Worst-TC attacks than in natural scenarios due to its focus on cost constraints. 
In natural environments, \texttt{TCRL} employs a defense-oriented strategy that prioritizes safer, lower-cost actions, even if it means sacrificing some rewards.
This strategy becomes advantageous under Worst-TC attack scenarios, allowing the policy to perform more robustly against adversarial perturbations, while maintaining higher rewards than baselines.

Compared to the three baseline training methods, \texttt{TCRL} achieves significantly lower safety costs against temporal-coupled perturbations across all tasks: a reduction ranging from 559.38\% to 1864.58\% in Ball-Circle, 1779.49\% to 15,383.33\% in Ball-Run, 14.29\% to 1514.29\% in Car-Circle, and 100.00\% to 19,077.78\% in Car-Run.
This is because the specialized safety cost constraint function in \texttt{TCRL}, which is designed for worst-case cost estimation, enables more accurate constraint enforcement under temporally coupled perturbations, thereby improving the safety of CRL in adversarial settings.

Second, in the comparison with the three baselines, TCRL also demonstrated good safety performance under temporal-independent perturbation attacks and could maintain a low safety cost under MAD, MC and Random attackers. Further, ADV-PPOL (MC) consistently achieves the lowest safety cost when attacked by the MC attacker, as it is specifically trained under state perturbations introduced by this attacker.
However, due to the lack of defense strategies, its safety cost remains relatively high when attacked by temporal-coupled perturbations.

Third, under attacks from the Worst-TC attacker$-$a temporally coupled perturbation adversary$-$the reward values of the three baselines decrease significantly compared to their performance under other attacks and natural conditions.
However, \texttt{TCRL} achieves a higher reward under the Worst-TC attack than it does under natural conditions, and its reward also surpasses those of all three baselines.
Compared to the three baselines, \texttt{TCRL} improves reward values by 11.09\% to 26.47\% in Ball-Circle, 8.36\% to 16.73\% in Ball-Run, 14.62\% to 34.76\% in Car-Circle, and 24.04\% to 33.58\% in Car-Run.
This is because the dual-constraint defense mechanism in \texttt{TCRL}, based on reward signals, can effectively mitigate reward-targeted attacks from temporal-coupled perturbation attackers and enhance the robustness of CRL against such perturbations.

\vspace{-3mm}
\subsection{Ablation Study} 
We investigate the impact of the proposed cost constraint functions and reward-based temporal defense constraints on enhancing the robustness of \texttt{TCRL} against temporal-coupled perturbation attacks. 
To evaluate the contribution of each component, we define two ablated variants: (i) \texttt{TCRL-Worstcost}, which removes the cost constraint functions, and (ii) \texttt{TCRL-Reward}, which omits the reward-based temporal defense constraints.
Table~\ref{tab:Ablation} reports the experimental results. Compared to the full \texttt{TCRL}, we make these observations:



First, removing the cost constraint function in \texttt{TCRL- Worstcost} results in an increase in safety cost by 13.70 in the Ball-Circle task, 26.04 in Ball-Run, 2.37 in Car-Circle, and 20.34 in Car-Run. This increase indicates that, without explicit penalization of unsafe behavior, the agent becomes more susceptible to temporal-coupled perturbations.


Second, removing the reward-based temporal defense constraints in \texttt{TCRL-Reward} leads to a significant reduction in reward: 121.12 in Ball-Circle, 89.38 in Ball-Run, 51.46 in Car-Circle, and 154.96 in Car-Run. This degradation demonstrates that, without temporal reward guidance, the agent struggles to maintain effective behavior under temporal-coupled perturbations.

\section{Conclusion}
We propose \texttt{TCRL}, a robust CRL method enhancing agent robustness under temporally coupled state perturbations via a novel worst-case training framework. It includes a safety cost constraint function for per-state worst-case cost estimation and a dual-constraint reward-based defense to disrupt adversaries' temporal coupling while preserving reward unpredictability. Experiments show \texttt{TCRL} effectively defends against worst-case temporal-coupled perturbations and maintains safety under temporal-independent attacks.






\bibliographystyle{ACM-Reference-Format} 
\bibliography{sample}


\clearpage
\appendix
\section{Appendix}
\subsection{Examples in  Temporal-coupled State Perturbations}

We provide two illustrative examples to highlight the importance of enhancing robust CRL methods against temporal-coupled state perturbations.

 \begin{example}
In the physical deployment of autonomous driving systems, directional (high-intensity light) interference on vehicle-mounted cameras at adjacent time steps constitutes a time-coupled adversarial perturbation attack. Attackers leverage laser devices on road sides or rear vehicles to continuously emit specific-wavelength beams toward the target vehicle's camera, with the beam intensity gradually increasing over time to simulate scenarios such as oblique morning sunlight. This temporal-coupled interference causes linear saturation of image sensors: initial deviations in lane line recognition; as light intensity rises, the system misidentifies glare as lane markers and continuously steers in the opposite direction, eventually leading the vehicle to enter oncoming lanes and maximizing the safety cost of collision risks. Tests conducted at the University of Michigan's MCity confirm that such progressive optical attacks can induce long-term performance degradation and safety violations without triggering immediate alarms.
 \end{example}

\begin{example}
Temporal-coupled attackers in warehouse robotic navigation systems actively compute real-time gradients to execute attacks. After hijacking indoor positioning signals, they inject directionally consistent perturbations with monotonically escalating magnitudes(e.g., progressive X-axis offsets). Malicious vectors are dynamically optimized to simultaneously minimize path-tracking rewards and maximize obstacle proximity(safety cost). This dual-objective attack reduces operational efficiency and, upon reaching cumulative offset thresholds, drives robots into hazardous material zones. Emergency braking induces cargo destruction and multi-robot cascade failures, maximizing safety costs asymptotically.
\end{example}
Such temporally coupled perturbation attacks affect decision-making at individual time steps and, through cumulative effects, influence the agent’s behavior over subsequent time steps.
This significantly compromises the agent’s task performance and safety. Therefore, enhancing robustness against temporal-coupled state perturbations is critically important.

\subsection{Theoretical Analysis}

Analogous to the worst-case action cost in Formula~(\ref{eq:Worst_c}), the worst-case cost value is formulated as follows.

\begin{definition}[Worst-case Cost Value]\label{Worst-case Cost Value}For a given policy $\pi$, its worst-case cost value is:
\begin{eqnarray}\label{eq:costvalue}
\underline{V}_c^{\pi}(s) := \mathbb{E}_{\pi, P} \left[ \sum_{t = 0}^{\infty} \gamma^t c(s_t, a^*_t) |s_0=s \right],
\end{eqnarray}
where $a^*_t$ is the worst-case action taken by state $s_t$ under a temporal-coupled perturbation attack.
\end{definition}

\begin{proof}[Proof of Theorem~\ref{thm:cost}] 
We show that $\underline{ \Gamma}^{\pi}$ is a contraction.
For two $Q$-functions $\mathcal{Q}_1$, $\mathcal{Q}_2:\mathcal{S}\times\mathcal{A}\rightarrow\mathbb{R}$, we can compute:
\begin{align}
&\|\mathcal{T}^{\pi}Q_1 - \mathcal{T}^{\pi}Q_2\|_{\infty} \\
=& \max_{s,a} \left| \sum_{s' \in \mathcal{S}} P(s' \mid s,a) \left[ R(s,a) + \gamma \min_{a' \in {\Omega}(s',\pi)} Q_1(s',a') \right.\right.\\ \nonumber 
&\ \left.\left. - R(s,a) + \gamma \min_{a' \in {\Omega}(s',\pi)} Q_2(s',a') \right] \right| \\
=& \gamma \max_{s,a} \left| \sum_{s' \in \mathcal{S}} P(s' \mid s,a) \left[ \min_{a' \in {\Omega}(s',\pi)} Q_1(s',a') \right.\right.\\ \nonumber
&\ \left.\left.- \min_{a' \in {\Omega}(s',\pi)} Q_2(s',a') \right] \right| \\
\leq& \gamma \max_{s,a} \sum_{s' \in \mathcal{S}} P(s' \mid s,a) \left| \min_{a' \in {\Omega}(s',\pi)} Q_1(s',a') \right.\\ \nonumber
&\ \left.- \min_{a' \in {\Omega}(s',\pi)} Q_2(s',a') \right| \\
\leq& \gamma \max_{s,a} \sum_{s' \in \mathcal{S}} P(s' \mid s,a) \max_{a' \in {\Omega}(s',\pi)} |Q_1(s',a') - Q_2(s',a')| \\
=& \gamma \max_{s,a} \sum_{s' \in \mathcal{S}} P(s' \mid s,a) \|Q_1 - Q_2\|_{\infty} \\
=& \gamma \|Q_1 - Q_2\|_{\infty}.
\end{align}

The following inequality is derived from the fact that
\begin{eqnarray}\label{eq:fact}
\left| \min_{m} f(m) - \min_{n} g(n) \right| \leq \max_{k} |f(k) - g(k)|.
\end{eqnarray}

The operator $\underline{ \Gamma}^{\pi}$ satisfies the following inequality:
\begin{eqnarray}\label{eq:operator}
\|\mathcal{T}^\pi Q_1 - \mathcal{T}^\pi Q_2\|_\infty \leq \gamma \|Q_1 - Q_2\|_\infty.
\end{eqnarray}
Thus, it constitutes a contraction with respect to the supremum norm.

As shown in Formula~(\ref{eq:Worst_c}), the worst-case action cost value is given by: $\underline{Q}^{\pi}_c\left(s,a\right):= \mathbb{E}_{\pi,P} \left[ \sum_{t=0}^{\infty} \gamma^{t} c(s_{t},a^{\ast}_{t})  \right]$, where $a^*_t$ is the worst-case action taken by state $s_t$ under a temporal-coupled perturbation attack. 
It follows that the temporal-coupled attacker induces the agent to select the worst possible action among all the achievable actions in $\Omega$.
As a result, we obtain $\underline {Q}_c ^ \pi(s, a) = \underline{ \Gamma}^{\pi} \underline {Q}_c ^ \pi (s, a) $. Therefore, $\underline{Q}_c^\pi(s, a)$is the fixed point of the Bellman operator.
\end{proof}

\subsection{PPO-Lagrangian Algorithm}
We use the gradient of the state-action value function $Q(s,a)$ to provide the direction to update states in $N$ steps:
\begin{eqnarray}\label{eq:step}
&s^{n+1} = \text{Proj}[s^n - \rho \nabla_{s^n} Q(s^0, \pi(s^n))], \\ \nonumber
& \quad n = 0, \ldots, n-1.
\end{eqnarray}
where $\text{Proj}[\cdot]$ is a projection onto $B_p^{\overline{\epsilon}_{t}}(s^0)$, $\rho$ is the learning rate, and $s^0$ is the state being attacked. 

Given that the $Q$-function and policy are parameterized by neural networks, the gradient can be backpropagated from $Q$ to $s^n$ through $\pi(\tilde{a}|s^n)$. This process can be efficiently solved using optimizers such as adaptive moment estimation(ADAM).

The objective of PPO (clipped) has the form ~\cite{schulman2017proximalpolicyoptimizationalgorithms}:
\begin{align}\label{eq:PPOL}
\ell_{ppo}=& \min 
\left[\frac{\pi_{\theta}(a|s)}{\pi_{\theta n}(a|s)} A^{\pi_{\theta n}}(s, a), \operatorname{clip}\left(\frac{\pi_{\theta}(a|s)}{\pi_{\theta n}(a|s)}, 1-\epsilon, \right.\right.\\ \nonumber 
&\ \left.\left. 1+\epsilon\right) A^{\pi_{\theta n}}(s, a)\right]
\end{align}

In \texttt{TRCL}, we adopt the PID Lagrangian method~\cite{stooke2020responsive}, which reduces the oscillation and overshoot encountered in traditional Lagrangian methods. 
The loss function employed in the PPO-Lagrangian method is formulate as follows:
\begin{eqnarray}\label{eq:PPOL}
\ell_{\text{ppol}} = \frac{1}{1 + \lambda} \left( \ell_{\text{ppo}} + V_r^\pi - \lambda V_c^\pi \right)
\end{eqnarray}
The Lagrangian multiplier $\lambda$ is calculated using feedback control on $ V_c^{\pi} $. During this process, the control parameters $ K_P $, $ K_I $, and $ K_D $ need to be fine - tuned.

\subsection{Pseudocode of \texttt{TRCL} Algorithm}

\begin{algorithm}[H]
\caption{\texttt{TCRL} Algorithm}
\label{pseudocode1}

\raggedright
\textbf{Input:} Number of interations $T$, CRL algorithm loss  function $\mathcal{L}_{adv}$, safety cost threshold $\eta$, reward the threshold of autocorrelation coefficient $\epsilon_{corr}$, reward maximizes the threshold of entropy change rate $\epsilon_{ent}$.\\

\textbf{Output:} Temporal-coupled robust CRL policy $\pi_{adv}$. 

\begin{algorithmic}[1]

\STATE Initialize policy\ network\ $\pi_{\theta}(a|s)$\ with\ parameters\ $\theta$.
\STATE Initialize\ worst-case\ temporal-coupled\ critic\ network\ $\underline{Q}^{\pi}_{\varphi}(s,a)$\ with\ parameters\,\ $\varphi.$
\FOR{$k=0,1,\dots ,T$}
    \STATE  Collect\ the\ trajectory\ $\mathcal{M}={\tau_{k}}$\ via\ running\ $\pi_{\theta}$\ in\ CRL\ task.
    \STATE  Find max/min bounds of \ $\pi_{\theta}$: ${\pi}_{max}$\  and\  ${\pi}_{min}$.

    \STATE  Compute action set $\Omega(s, \pi)=[{\pi}_{max},{\pi}_{min}]$.
    \STATE  Select the worst action $\tilde{a}_{t+1}$ for next states.

    \STATE  Compute worst-case cost $\underline{c}_t$.

    \STATE Update the parameter of worst case temporal coupled critic network via minimizing the cost error($\mathcal{L}_{cost}$).

    \STATE  Compute the reward autocorrelation coefficient $C_{corr}.$
    
    \STATE Compute the reward the entropy change rate $C_{ent}.$

    \STATE Update the network $\pi_{\theta}$ by minimizing $\mathcal{L}_{adv}$.

\ENDFOR
\STATE \textbf{return} $\pi_{adv}$
\end{algorithmic}
\end{algorithm}

\subsection{The Details of The Process of Computing ${{\Omega}}_{\text{adv}}(s, \pi)$}

These techniques enable the derivation of layer-wise bounds for the neural network's behavior. Specifically, we calculate ${\pi}_{max}$ and ${\pi}_{min}$ to satisfy the condition ${\pi}_{max}(s) \geq \pi(\hat{s}) \geq {\pi}_{min}(s)$, $\forall\hat{s}\in\mathcal{B}_{\epsilon}(s)$. By employing this relaxation approach, we can define a superset of ${\Omega}_{\text{adv}}$, denoted as $\hat{{\Omega}}_{\text{adv}}$. Then, by replacing ${\Omega}_{\text{adv}}$ by $\hat{{\Omega}}_{\text{adv}}$, the fix point of Equation~(\ref{eq:Bellman}) becomes a lower bound of the worst-case cost value. For continuous action spaces, $\hat{{\Omega}}_{\text{adv}}(s, \pi)$ contains actions that are constrained by ${\pi}_{max}(s)$ and ${\pi}_{min}(s)$. For discrete action space, we can first calculate the maximum and minimum probabilities of taking each action and derive the possible set of actions that can be selected.

\subsection{The Details of Hyperparameter}

We use a Gaussian policy across all experiments, where the mean is generated by the neural network and the variance is learned independently.
In the Car-Run task, both the policy and $Q$ networks are composed of two hidden layers with sizes (128, 128). For the other three tasks (Ball-Circle, Ball-Run and Car-Circle), both the policy and $Q$ networks consist of two hidden layers with sizes of (256, 256). We use the ReLU activation function for all tasks. The discount factor is set to $\gamma = 0.99$. The advantage function is estimated using $GAE$-$\lambda$ with $\lambda^{\text{GAE}} = 0.95$. The KL divergence step size is set to $\tau_{\text{KL}} = 0.01$, and the clipping coefficient is set to $\epsilon_c = 0.02$.
In the robotic control tasks, the PID parameters are set to $K_p = 0.1$, $K_i = 0.005$, and $K_d = 0.002$. The learning rate for all attackers is set to 0.05. The step size of the Adam optimizer is set to $3 \times 10^{-4}$, and the threshold is set to $\xi = 0.1$.
The set of hyperparameters in the experiments is provided in Table~\ref{parameter}.

\begin{table}[h]
\caption{Hyperparameter of Experiment.}
\centering
\begin{threeparttable}
\begin{adjustbox}{width=\columnwidth}
\renewcommand{\arraystretch}{1.4}
\begin{tabular}{c|c|c|c|c}
\hline
Parameter                 & Ball - Circle & Ball - Run & Car - Circle & Car - Run \\ \hline
training epoch            & 100         & 100      & 150        & 150     \\ \hline
batch size                & 40000       & 40000    & 40000      & 40000   \\ \hline
minibatch                 & 300         & 300      & 300        & 300     \\ \hline
rollout length            & 200         & 200      & 300        & 200     \\ \hline
cost limit $\eta$         & 5           & 5        & 5          & 5       \\ \hline
$\epsilon_{corr}$         & 5           & 5        & 5          & 5       \\ \hline
$\epsilon_{ent}$          & 2           & 2        & 2          & 2       \\ \hline
actor optimization step M & 80          & 80       & 80         & 80      \\ \hline
actor learning rate       & 0.0002      & 0.0002   & 0.0003     & 0.0003  \\ \hline
cirtic learning rate      & 0.001       & 0.001    & 0.001      & 0.001   \\ \hline
\end{tabular}
\end{adjustbox}
\end{threeparttable}
\label{parameter}
\end{table}

\subsection{The Details of Experiment Settings}
The relevant results were obtained on an Intel Core i7-12700 CPU with 26GB of RAM. Our operation system is Ubuntu 22.04.5 LTS(x64).

\end{document}